\documentclass[conference]{IEEEtran}
\usepackage{amsmath,amssymb,amsfonts}
\usepackage{algorithmic}
\usepackage{algorithm}
\usepackage{graphicx}
\usepackage{textcomp}
\usepackage{xcolor}
\usepackage{booktabs}
\usepackage{hyperref}
\usepackage{cite}
\usepackage{url}
\usepackage{listings}
\usepackage{tabularx}
\usepackage{amsthm}
\usepackage{mdframed}

\hypersetup{
  colorlinks=true,
  urlcolor=blue,
  linkcolor=blue,
  citecolor=blue,
  filecolor=blue,
  pdftitle={PRIMA: Operational Patterns for Resilient Multi-Agent Research},
  pdfauthor={Sasank Annapureddy}
}

\newtheorem{theorem}{Theorem}

\begin{document}

\title{PRIMA: Operational Patterns for Resilient Multi-Agent\\Research with Verifiable Identity and Convergent Feedback}

\author{
\IEEEauthorblockN{Sasank Annapureddy}
\IEEEauthorblockA{\href{mailto:sasank11@icloud.com}{sasank11@icloud.com}}
}

\maketitle

\begin{abstract}
Large language models (LLMs) can produce capable single-turn outputs but operating them as a coordinated, multi-agent research system over multi-hour wall-clock runs surfaces a class of failure modes that single-shot evaluation cannot. Upstream providers throttle without warning; orchestrators silently retry through degraded modes; sub-agents drift the task to fit accessible tools, narrate machinery instead of using it, open revision iterations with extensive self-apology, or treat upstream context as executable directives. We present \textbf{PRIMA}, a multi-agent research system whose primary contributions are a set of \emph{operational patterns} for surviving these failure modes in production: a resilience-and-recovery layer that detects upstream rate-limit signals at the provider boundary, persists a typed pause record to disk, and resumes long-running runs without re-executing converged work even across process restarts; a sub-agent operating discipline that encodes task-fidelity, tool-use, revision, and inter-step context-boundary norms as a structural prompt layer; and a multi-phase application pattern for structured engineering deliverables that pairs orthogonal draft steps with an explicit cross-document harmonization pass before final synthesis, addressing the temporal-asymmetry problem in sequentially-generated artifact bundles. These operational contributions sit atop a foundational protocol comprising a research-program specification language with explicit convergence criteria, a dual-metric scoring engine that combines LLM-judged rubric evaluation with sandboxed code execution, an outer meta-optimization loop that evolves the research program across generations, event-driven persistence with tree-structured branching, a hook-based middleware system, context compaction via LLM summarization, and a multi-provider LLM abstraction. Agent identities are derived from prime powers, providing collision-free identifiers and trivially-verifiable cluster membership without a central registry. Theoretical guarantees include $O(k)$ verification time for an agent at depth $k$, $O(V+E)$ DAG validation, and identity collision freedom by the Fundamental Theorem of Arithmetic. A case study---an open-problem investigation of Graph Isomorphism in which the protocol was designed to produce an original algorithmic proposal rather than a literature survey---grounds the architectural claims in a concrete generated artifact.\footnote{Certain implementation details---exact prompt text, specific signal patterns used in resilience detection, and the contractual contents of structured deliverable steps---are withheld pending intellectual-property disclosure; the architectural claims and theoretical guarantees presented here are unaffected. See \S\ref{sec:disclosure}.}
\end{abstract}

\begin{IEEEkeywords}
multi-agent systems, LLM orchestration, operational resilience, rate-limit pause and resume, prompt engineering, indirect prompt injection, structured engineering pipelines, convergent feedback, dual-metric scoring, prime-power identity, autonomous research
\end{IEEEkeywords}

\section{Introduction}

The application of large language models to open-ended research tasks reveals a fundamental tension: LLMs produce capable single-turn outputs but lack mechanisms for iterative refinement toward verifiable quality targets. When research requires multiple specialized perspectives---literature review, methodology design, quantitative analysis, synthesis---the challenge compounds: agents must coordinate, share context, and produce outputs whose provenance is traceable and verifiable.

Existing multi-agent frameworks \cite{wu2023, hong2023} address coordination through role-based prompting and centralized message routing. However, these systems suffer from three critical limitations: (1) \textbf{identity opacity}---agent identities are arbitrary strings with no mathematical guarantees against collision or spoofing; (2) \textbf{convergence ambiguity}---there is no formal criterion for when an agent's output is ``good enough''; (3) \textbf{static programs}---the research protocol itself cannot evolve based on execution results.

This paper introduces \textbf{PRIMA}, a system that addresses all three limitations through a novel combination of prime-power identity, dual-metric convergence scoring, and meta-level program optimization.

\subsection{Contributions}

The paper's primary contributions are three operational patterns for long-running multi-agent research systems, supported by foundational infrastructure:

\paragraph{Operational patterns (primary)}
\begin{enumerate}
\item A \emph{resilience and recovery protocol} for long-running multi-agent runs, including typed rate-limit detection at the provider boundary, on-disk pause records that survive process restarts, model-preserving resume semantics, and a strengthened completed-step loader that rejects poisoned outputs.
\item A \emph{sub-agent operating discipline}---a structural prompt layer encoding task-fidelity, tool-use, revision, and inter-step context-boundary norms---that mitigates known multi-agent failure modes such as task drift, machinery narration, apology loops, and indirect prompt injection through context.
\item An \emph{application pattern for structured engineering deliverables} that pairs orthogonal draft phases with an explicit cross-document harmonization pass before final synthesis, addressing the temporal-asymmetry problem in multi-step generation.
\end{enumerate}

\paragraph{Foundational infrastructure (supporting)}
\begin{enumerate}
\setcounter{enumi}{3}
\item A \emph{human-readable research protocol} specifying convergence criteria, dependency DAGs, and metric configurations in a machine-executable markdown format.
\item A \emph{dual-metric scoring engine} that combines LLM-judged rubric evaluation with sandboxed quantitative metric execution, enabling hybrid scoring with configurable weighting.
\item A \emph{meta-optimization loop architecture} that analyzes convergence reports and generates revised program versions across generations; we describe the design and report architectural properties but leave quantitative effectiveness evaluation to future work.
\item A \emph{prime-power identity scheme} providing collision-free, easily-verifiable agent identities by injection from (cluster, depth) into the integers via $p^k$, requiring no central registry beyond the cluster--prime mapping.
\item An \emph{event-driven persistence layer} with tree-structured branching, hook-based middleware, and context compaction.
\item A \emph{multi-provider LLM abstraction} supporting multiple provider backends with automatic discovery and graceful fallback, requiring no API keys when using CLI-based providers.
\end{enumerate}

\section{Background and Motivation}

\subsection{Why Multi-Agent Research Systems Fail}

Current multi-agent LLM systems rely on one of four coordination strategies: (1) \textbf{Role-based prompting} (AutoGen \cite{wu2023}, MetaGPT \cite{hong2023}), which assigns personas without formal identity guarantees; (2) \textbf{Centralized routing}, creating single points of failure and bottlenecks; (3) \textbf{Fixed-threshold evaluation}, applying uniform quality bars regardless of task difficulty; (4) \textbf{Static protocols}, where the research plan cannot adapt to discovered difficulties.

All four conflate \emph{coordination} with \emph{verification}. They orchestrate agents but cannot prove which agent produced which output, whether the output meets domain-specific quantitative criteria, or whether the research protocol itself is optimal.

\subsection{On Identity}

Several of our infrastructure pieces depend on having stable, machine-verifiable agent identifiers without requiring a central registry to disambiguate them at audit time. We adopt prime-power identities ($\text{id} = p^k$) for this purpose. The construction is mathematically simple: any injection from (cluster, depth) into the integers would suffice, but the prime-power form makes the inverse---given an identity, recover (cluster, depth)---a trivial factorization. We do not claim this is the only viable identity scheme or that prime-power identities have cryptographic properties; the relevant primes ($3, 5, \ldots, 997$) are far below cryptographic key sizes. The claim is operational: identities are easy to assign, easy to verify, and the consensus token---the product over all converged identities---uniquely encodes the contributing multiset under factorization.

\section{Prime Identity Framework}

\subsection{Mathematical Foundation}

PRIMA assigns identities using pure prime powers. Let $\mathcal{P} = \{p_1, p_2, \ldots, p_N\}$ be a pool of $N$ distinct primes. Each research cluster (corresponding to one step in the program) is assigned a unique prime $p_i \in \mathcal{P}$. An agent operating at depth $k$ within cluster $p_i$ receives identity:
\begin{equation}
\text{id}(p_i, k) = p_i^k
\end{equation}

\begin{theorem}[Collision Freedom]
For distinct primes $p_i \neq p_j$ and positive integers $k, l$, we have $p_i^k \neq p_j^l$. This follows directly from the Fundamental Theorem of Arithmetic.
\end{theorem}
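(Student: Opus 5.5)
I argue by contradiction and reduce everything to the uniqueness half of the Fundamental Theorem of Arithmetic. Suppose $p_i \neq p_j$ are primes, $k,l \geq 1$, and $p_i^k = p_j^l =: n$. Then $n > 1$, since $k \geq 1$ and $p_i \geq 2$. Writing $n$ as $p_i \cdot p_i \cdots p_i$ ($k$ factors) and as $p_j \cdots p_j$ ($l$ factors) gives two factorizations of the same integer into primes. Uniqueness of prime factorization says these multisets coincide. The first multiset contains only $p_i$ and the second only $p_j$, so $p_i = p_j$, a contradiction.

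To avoid citing the full theorem, I would instead use only its key lemma, Euclid's lemma: if a prime divides a product, it divides one of the factors. Since $p_i \mid p_i^k = p_j^l$, Euclid's lemma (applied inductively over the $l$ factors) gives $p_i \mid p_j$. The only positive divisors of the prime $p_j$ are $1$ and $p_j$, and $p_i \neq 1$, so $p_i = p_j$. This is the same contradiction.

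I do not expect a real obstacle. The only care needed is the hypothesis $k,l \geq 1$. If $k=0$ or $l=0$ were allowed, then $p_i^0 = p_j^0 = 1$ would collide, so the positivity of depths is essential. I would state this explicitly, since it constrains how the identity scheme may assign depths.

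I would also note the slightly stronger fact behind the scheme's inverse-recovery claim: the full map $(p_i,k) \mapsto p_i^k$ is injective. If $p_i^k = p_i^l$, then $k = l$, because $p_i \geq 2$ makes $t \mapsto p_i^t$ strictly increasing. Combined with the statement above, distinct (cluster, depth) pairs yield distinct identities.
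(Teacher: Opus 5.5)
Your proof is correct, and your main argument is the same as the paper's: assume $p_i^k = p_j^l$ and note that the two prime factorizations ($k$ copies of $p_i$ versus $l$ copies of $p_j$) differ, which contradicts uniqueness in the Fundamental Theorem of Arithmetic. The paper does not spell out your Euclid's-lemma variant, the observation that $k,l \geq 1$ is essential, or the injectivity of the full map $(p_i,k)\mapsto p_i^k$, and all three are valid additions.
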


\begin{proof}
Suppose $p_i^k = p_j^l$ for $p_i \neq p_j$. By the Fundamental Theorem of Arithmetic, every positive integer has a unique prime factorization. The number $p_i^k$ has factorization $\{p_i: k\}$, while $p_j^l$ has factorization $\{p_j: l\}$. Since $p_i \neq p_j$, these factorizations are distinct, contradicting uniqueness. \qed
\end{proof}

\subsection{Verification Protocol}

Given an agent identity $n$, membership verification proceeds via trial division:

\begin{algorithm}[t]
\caption{Agent Identity Verification}
\begin{algorithmic}[1]
\REQUIRE Agent identity $n$, expected cluster prime $p$
\ENSURE Verification result $(valid, depth)$
\STATE $k \leftarrow 0$
\STATE $m \leftarrow n$
\WHILE{$m \bmod p = 0$}
  \STATE $m \leftarrow m / p$
  \STATE $k \leftarrow k + 1$
\ENDWHILE
\IF{$m = 1$ \AND $k > 0$}
  \RETURN $(true, k)$
\ELSE
  \RETURN $(false, 0)$
\ENDIF
\end{algorithmic}
\end{algorithm}

This verification runs in $O(\log_p n) = O(k)$ time, requiring no network calls, no database lookups, and no trust assumptions. Any party holding an agent ID and the expected cluster prime can independently verify membership.

\subsection{Capacity and Bounds}

The implementation draws cluster primes from a bounded prime pool. For 64-bit identity compatibility, the maximum depth for prime $p$ is:
\begin{equation}
k_{\max}(p) = \lfloor \log_p(2^{63} - 1) \rfloor
\end{equation}

Total capacity in a 64-bit identity space comfortably exceeds the practical requirements of a single research program (thousands of unique agent identities across hundreds of clusters), and the pool can be enlarged trivially by extending the prime sieve.

\subsection{Consensus Token}

Upon completion, the system computes a consensus token as the product of all converged agent identities:
\begin{equation}
\mathcal{T} = \prod_{i \in \text{converged}} \text{id}_i
\end{equation}

By the Fundamental Theorem of Arithmetic, $\mathcal{T}$ uniquely encodes which agents converged and at what depths. Any verifier can factorize $\mathcal{T}$ to reconstruct the exact set of contributing agents without consulting any registry.

\section{Research Protocol: \texttt{program.md}}

\subsection{Design Principles}

PRIMA executes research programs specified in \texttt{program.md}---a markdown document combining YAML configuration blocks with human-readable descriptions. The format embodies three principles: (1) \emph{human-readable}: researchers can author and review programs without tooling; (2) \emph{machine-executable}: the parser produces a fully typed \texttt{ResearchProgram} dataclass; (3) \emph{version-trackable}: frontmatter includes version, parent version, generation score, and mutation log for meta-optimization lineage.

\subsection{Structure}

A \texttt{program.md} contains six sections:

\begin{table}[h]
\caption{Structure of a \texttt{program.md} research protocol.}
\label{tab:program}
\centering
\small
\begin{tabular}{@{}lp{5.4cm}@{}}
\toprule
\textbf{Section} & \textbf{Contents} \\
\midrule
Frontmatter & Title, author, version, parent\_version, generation\_score, mutation\_log, tags, area, cluster\_prime \\
\addlinespace
Problem & Free-text problem statement providing research context \\
\addlinespace
Convergence & global\_threshold ($\in [0,1]$), max\_iterations, scoring\_method, metric\_weight ($\alpha$), feedback\_style, timeout, early\_stop, retry\_on\_failure \\
\addlinespace
Steps & Per-step: id, name, depth, goal, accept\_criteria (list), metric config (optional), depends\_on, context\_from, output format \\
\addlinespace
Dependencies & Directed graph, parallel\_groups for phased execution, execution\_mode (phased/sequential/eager) \\
\addlinespace
Output & Format, include\_audit\_trail, export\_formats \\
\bottomrule
\end{tabular}
\end{table}

\subsection{Step Configuration}

Each step defines the scope of work for one agent cluster. The critical configuration elements are:

\textbf{Acceptance Criteria.} A list of human-readable criteria against which the LLM evaluator judges output quality. Example: ``Identifies at least three independent approaches to the problem.''

\textbf{Metric Configuration.} An optional quantitative evaluation block specifying:
\begin{itemize}
\item \texttt{type}: minimize, maximize, or target
\item \texttt{extract}: key name to extract from agent output
\item \texttt{baseline}: starting metric value
\item \texttt{target}: goal metric value
\item \texttt{score\_formula}: custom scoring expression
\end{itemize}

\textbf{Dependencies.} Steps may depend on outputs from prior steps via \texttt{depends\_on} (execution ordering) and \texttt{context\_from} (context injection).

\section{Dual-Metric Scoring Engine}

\subsection{Scoring Modalities}

PRIMA supports three scoring methods, selectable per step:

\begin{table}[h]
\caption{Scoring methods and their mechanisms.}
\label{tab:scoring}
\centering
\small
\begin{tabular}{@{}llp{3.5cm}@{}}
\toprule
\textbf{Method} & \textbf{Trigger} & \textbf{Mechanism} \\
\midrule
Rubric & accept\_criteria only & LLM evaluates output against criteria checklist \\
Metric & metric block only & Sandbox executes code, extracts value, applies formula \\
Hybrid & both present & $s = \alpha \cdot s_m + (1-\alpha) \cdot s_r$ \\
\bottomrule
\end{tabular}
\end{table}

\subsection{Rubric Scoring}

The LLM evaluator receives the agent's output, the step's acceptance criteria, and research context. It returns a structured response:

\begin{lstlisting}
CRITERIA_MET: [1, 3, 4]
CRITERIA_MISSED: [2]
SCORE: 0.75
FEEDBACK: Criterion 2 requires more depth...
\end{lstlisting}

The parser extracts criteria indices, maps them to named criteria, and returns a typed result with score $\in [0,1]$, criteria\_met, criteria\_missed, and feedback text.

\subsection{Metric Scoring}

For quantitative objectives, the scorer bypasses LLM evaluation entirely:

\begin{enumerate}
\item Extract Python code blocks from agent output
\item Execute in a sandboxed subprocess with configurable timeout
\item Extract the metric value via JSON parsing, key-value regex, or last-numeric fallback
\item Apply the scoring formula:
\end{enumerate}

\begin{equation}
s_m = \begin{cases}
1 - \frac{v - t}{b - t} & \text{if type = minimize} \\
\frac{v - b}{t - b} & \text{if type = maximize} \\
1 - \frac{|v - t|}{|b - t|} & \text{if type = target}
\end{cases}
\end{equation}

where $v$ is the extracted value, $b$ is the baseline, and $t$ is the target. Custom formulas override these defaults via safe evaluation.

\subsection{Hybrid Scoring}

When both criteria and metrics are present, the hybrid score is:
\begin{equation}
s = \alpha \cdot s_m + (1 - \alpha) \cdot s_r
\end{equation}

where $\alpha$ is the \texttt{metric\_weight} (default 0.7), $s_m$ is the metric score, and $s_r$ is the rubric score. This weighting reflects the principle that quantitative results should dominate when available, with qualitative assessment providing a complementary signal.

\section{Convergence Orchestration}
\label{sec:orchestration}

\subsection{Inner Loop: Step Convergence}

The orchestrator executes each step through an iterative convergence loop:

\begin{algorithm}[t]
\caption{PRIMA Convergence Loop}
\begin{algorithmic}[1]
\REQUIRE Program $P$, LLM client $\mathcal{L}$, scorer $\mathcal{S}$
\ENSURE Research output package $\mathcal{R}$
\STATE $\mathcal{C} \leftarrow \text{InitializeCluster}(P)$ \COMMENT{Assign primes}
\FOR{each phase $\phi$ in execution order}
  \FOR{each step $i$ in $\phi$ \textbf{in parallel}}
    \STATE $\theta_i \leftarrow P.\text{threshold}(i)$
    \STATE $\text{ctx} \leftarrow \text{GatherContext}(i, P)$
    \FOR{$t = 1$ \TO $P.\text{max\_iterations}$}
      \STATE $o_t \leftarrow \text{Agent}_{p_i^k}.\text{execute}(\text{ctx}, f_{t-1})$
      \STATE $e_t \leftarrow \mathcal{S}.\text{score}(o_t, P.\text{step}(i))$
      \IF{$e_t.\text{score} \geq \theta_i$}
        \STATE Mark agent as \textsc{converged}
        \STATE \textbf{break}
      \ENDIF
      \STATE $f_t \leftarrow \mathcal{L}.\text{feedback}(o_t, e_t, P.\text{style})$
    \ENDFOR
    \IF{not converged}
      \STATE Mark agent as \textsc{force\_accepted}
    \ENDIF
  \ENDFOR
\ENDFOR
\STATE $\mathcal{T} \leftarrow \prod_{i \in \text{converged}} \text{id}_i$ \COMMENT{Consensus token}
\RETURN $\text{AssembleOutput}(\mathcal{C}, \mathcal{T})$
\end{algorithmic}
\end{algorithm}

\subsection{Execution Modes}

The dependency DAG supports three execution strategies:

\begin{itemize}
\item \textbf{Phased}: Steps within each parallel group execute concurrently via \texttt{asyncio.gather}; groups execute sequentially.
\item \textbf{Sequential}: Strict topological order computed via Kahn's algorithm \cite{kahn1962}.
\item \textbf{Eager}: Steps execute as soon as all dependencies are satisfied, maximizing parallelism.
\end{itemize}

DAG validation runs in $O(V + E)$ using DFS-based cycle detection with three-color marking (WHITE/GRAY/BLACK), where a back-edge to a GRAY node indicates a cycle.

\subsection{Context Injection}

Downstream steps receive context from upstream outputs through two mechanisms: (1) \texttt{depends\_on}: establishes execution ordering; (2) \texttt{context\_from}: injects the converged output of specified steps into the agent's prompt. This enables research pipelines where, e.g., a synthesis step receives both a literature review and a methodology analysis as input context.

\subsection{Feedback Generation}

When an agent's output does not meet the convergence threshold, the system generates targeted feedback in one of three styles:

\begin{itemize}
\item \textbf{Directive}: Specific, actionable instructions (``Fix the methodology to include X'')
\item \textbf{Socratic}: Probing questions guiding discovery (``Have you considered the implications of Y?'')
\item \textbf{Score-only}: Numeric score with minimal guidance, for agent self-correction
\end{itemize}

\section{Meta-Optimization Loop}

\subsection{Outer Loop: Program Evolution}

The meta-optimizer wraps the convergence loop in an outer generation cycle. After each complete execution, it analyzes convergence reports to identify:

\begin{itemize}
\item \textbf{Slow-converging steps}: high iteration count $\rightarrow$ split, refine criteria, add \texttt{context\_from}
\item \textbf{Over-easy steps}: immediate convergence $\rightarrow$ raise threshold
\item \textbf{Force-accepted steps}: never converged $\rightarrow$ relax criteria or restructure approach
\end{itemize}

\subsection{Efficiency-Weighted Scoring}

The meta-optimizer uses an efficiency-weighted overall score:
\begin{equation}
\text{overall} = \bar{s} \cdot \frac{1}{1 + 0.1 \cdot \bar{t}}
\end{equation}

where $\bar{s}$ is the mean final score across agents and $\bar{t}$ is the mean iteration count. This penalizes programs that achieve high scores only through excessive iteration---a proxy for cost efficiency.

\subsection{Generation Tracking}

Each generated program version carries lineage metadata in its frontmatter:
\begin{lstlisting}
version: 3
parent_version: 2
generation_score: 0.847
mutation_log:
  - "Split step.2 into sub-steps for depth"
  - "Added context_from: step.1 to step.3"
  - "Raised threshold for step.4 to 0.90"
\end{lstlisting}

The original seed program is preserved as \texttt{program\_v1\_seed.md} and is never overwritten, ensuring full reproducibility of the optimization trajectory.

\section{Event-Driven Persistence}

\subsection{Append-Only Event Store}

PRIMA persists all system activity as typed events in an append-only JSONL store. Each event is a structured record:
\begin{equation}
e = (\text{id}, \text{parent\_id}, \text{type}, \tau, \text{data}, \text{meta})
\end{equation}

where \text{id} is a UUID, \text{parent\_id} links to the causal predecessor, $\tau$ is the timestamp, and \text{meta} optionally carries agent\_id, step\_id, and cluster\_prime.

\subsection{Event Types}

The system defines 28 event types spanning six categories:

\begin{table}[h]
\caption{Event type categories in PRIMA.}
\label{tab:events}
\centering
\small
\begin{tabular}{@{}lp{5.0cm}@{}}
\toprule
\textbf{Category} & \textbf{Event Types} \\
\midrule
Session & session\_start, session\_end, session\_fork \\
Agent & agent\_spawn, agent\_score, agent\_converged, agent\_force\_accepted, agent\_error \\
Orchestrator & program\_loaded, cluster\_initialized, phase\_start, phase\_end, step\_start, step\_end \\
LLM & llm\_request, llm\_response, llm\_error \\
Scoring & rubric\_score, metric\_score, hybrid\_score, feedback\_generated \\
Meta & meta\_generation\_start, meta\_generation\_end, program\_mutated \\
\bottomrule
\end{tabular}
\end{table}

\subsection{Tree Structure and Branching}

The \texttt{parent\_id} field creates a tree structure enabling branching and forking. A session fork creates a new event stream that inherits all events up to the fork point, enabling:
\begin{itemize}
\item \textbf{What-if analysis}: fork from a checkpoint to test alternative strategies
\item \textbf{Deterministic replay}: reproduce any execution by replaying its event stream
\item \textbf{Audit compliance}: complete provenance chain from any output to its originating events
\end{itemize}

\subsection{Compatibility Bridge}

The event system includes a \texttt{to\_audit\_trail()} converter that transforms score and convergence events into the legacy audit trail format, ensuring backward compatibility with existing analysis tools.

\section{Hook and Middleware System}

PRIMA provides an extensible interceptor pattern with approximately a dozen hook-point categories spanning the full execution lifecycle:

\begin{table}[h]
\caption{Hook point categories.}
\label{tab:hooks}
\centering
\small
\begin{tabular}{@{}lp{4.8cm}@{}}
\toprule
\textbf{Phase} & \textbf{Hook Points} \\
\midrule
Parsing & before\_parse, after\_parse \\
Cluster & before\_register, after\_register \\
Step & before\_step, after\_step \\
Iteration & before\_iteration, after\_iteration \\
LLM & before\_llm\_call, after\_llm\_call \\
Scoring & before\_score, after\_score \\
Feedback & before\_feedback, after\_feedback \\
Output & before\_assemble, after\_assemble \\
Context & before\_context\_inject, context\_overflow \\
\bottomrule
\end{tabular}
\end{table}

Hooks execute in priority order within each point. Each hook receives a mutable \texttt{HookContext} and returns a \texttt{HookResult} with \texttt{proceed} (boolean) and optional \texttt{modified\_data}. The first hook returning \texttt{proceed=False} aborts the intercepted operation.

Built-in hook factories include: \texttt{logging\_hook} (observation), \texttt{score\_threshold\_hook} (enforce minimum quality), \texttt{timeout\_hook} (wall-clock limits), \texttt{max\_iterations\_hook} (iteration caps), and \texttt{data\_redaction\_hook} (strip sensitive fields).

\section{Context Compaction}

\subsection{Summarization Over Truncation}

As agent conversations grow, earlier messages must be managed to stay within context limits. PRIMA uses LLM-generated summarization rather than naive truncation:

\begin{algorithm}[t]
\caption{Context Compaction}
\begin{algorithmic}[1]
\REQUIRE Messages $M$, budget $L_{\max}$, reserve $R$, keep\_recent $k$
\IF{$\text{tokens}(M) < L_{\max} - R$}
  \RETURN $M$ \COMMENT{No compaction needed}
\ENDIF
\STATE $M_{\text{old}} \leftarrow M[:-k]$
\STATE $M_{\text{recent}} \leftarrow M[-k:]$
\STATE $\text{summary} \leftarrow \mathcal{L}.\text{summarize}(M_{\text{old}})$
\RETURN $[\text{summary}] \oplus M_{\text{recent}}$
\end{algorithmic}
\end{algorithm}

The summarization prompt instructs the LLM to preserve: key decisions, important data points, current task state, and unresolved questions. A mechanical fallback (first + last sentences per message) activates if LLM summarization fails.

\subsection{Compression Metrics}

Compaction reports include \texttt{compression\_ratio} $= 1 - |\text{tokens}_{\text{after}}| / |\text{tokens}_{\text{before}}|$ and \texttt{tokens\_saved}. Typical compression achieves 60--80\% reduction while preserving decision-relevant information.

\section{Multi-Provider LLM Abstraction}

\subsection{Provider Architecture}

PRIMA uses a provider registry that auto-discovers available LLM backends in priority order:

\begin{enumerate}
\item \textbf{Claude CLI}: Subprocess invocation of \texttt{claude -p}; no API key required; uses existing CLI authentication
\item \textbf{Anthropic API}: Direct SDK integration; requires \texttt{ANTHROPIC\_API\_KEY}
\item \textbf{OpenAI API}: SDK integration; requires \texttt{OPENAI\_API\_KEY}
\item \textbf{Ollama}: Local model inference via HTTP; requires running Ollama server
\item \textbf{Mock}: Deterministic responses for testing; always available
\end{enumerate}

All providers implement a common interface returning \texttt{LLMResponse(content, input\_tokens, output\_tokens, model, provider)}. The client includes automatic retry with exponential backoff (3 attempts) and cost tracking across all calls.

\subsection{Zero-Dependency Design}

External provider SDKs are lazy-imported only when needed. The system operates with zero mandatory external dependencies beyond the Python standard library. PyYAML is the sole soft dependency for program parsing.

\section{Resilience and Recovery Protocol}
\label{sec:resilience}

\subsection{Motivation}

Long-running multi-agent runs are subject to upstream failure modes that single-shot evaluation frameworks do not encounter: per-plan usage limits with reset windows on the order of hours, transient transport-level errors during streaming, and operator-initiated process restarts. Naively retrying through these conditions either burns quota fruitlessly or, worse, corrupts the run state by accepting truncated or error-shaped outputs as converged work. PRIMA addresses these failure modes through a dedicated resilience layer with three coordinated mechanisms.

\subsection{Typed Rate-Limit Signal Detection}

At the provider boundary, PRIMA inspects each response (across both standard output and error channels) for upstream rate-limit signals. When such a signal is detected, the provider does not return a content payload---it raises a typed \texttt{RateLimitError} carrying the originating provider, an extracted reset hint, and a short diagnostic snippet. The detection layer is deliberately conservative: signals are only honored when the response payload is itself short and error-shaped, preventing false positives on substantive research content that happens to mention rate-limit concepts in prose.\footnote{The specific signal patterns used for detection are provider-specific and evolve with upstream platform behavior. The contribution is the architectural separation between transport-layer signal detection and orchestrator-layer pause semantics, not any specific pattern set.}

\subsection{On-Disk Pause Records}

When \texttt{RateLimitError} reaches the orchestrator, the affected iteration is \emph{not} written to disk as converged output, the step is \emph{not} re-attempted within the same run instance, and a structured pause record is persisted to the run's output directory:
\begin{equation}
\rho = (\text{step\_id}, \text{iteration}, \text{provider}, \text{reset\_at}, \tau, \text{snippet})
\end{equation}

The pause record survives backend process restarts. The orchestrator's main loop is unwound via a typed \texttt{OrchestratorPaused} exception, which the run controller catches and translates into a \texttt{paused} run status---distinct from both \texttt{running} and \texttt{error}. Crucially, a run-status query against a paused run returns the on-disk pause record even when the in-memory run dictionary has been lost to a process restart.

\subsection{Model-Preserving Resume}

Resume operates against the same output directory used by the original run. The resume entry-point accepts an optional model override and propagates it through the orchestrator construction. This design choice addresses a subtle failure mode observed during development: a resume operation that defaults the model parameter silently downgrades long-running runs to whichever model the orchestrator constructs by default, even when the original run was launched on a stronger model. Persisting the model selection at the call site, rather than within the run record, makes the override explicit and auditable.

On resume, the orchestrator's \texttt{\_load\_completed\_steps} loader scans the output directory and admits each step file only if it (a) exceeds a minimum-substance threshold and (b) does not match known error-shape patterns including rate-limit signals, max-turn exhaustion, and CLI-level error wrappers. Short or poisoned outputs are explicitly rejected, forcing re-execution of those steps---a guard against the failure mode where a step's earlier iteration wrote a brief error acknowledgement that would otherwise have been promoted to ``converged'' status on resume.

\subsection{Filename Normalization Under Adversarial Step Names}

A subtle correctness issue arises when an orchestrator persists step outputs using filenames derived from step names. If the step name contains path-separator characters (a forward slash in, say, a quarterly milestone label), the naive \texttt{step\_id\_name.md} filename becomes a relative path that resolves into a subdirectory, causing the write to fail and the step to be lost. PRIMA's \texttt{\_step\_filename} normalization sanitizes path-separator and other shell-special characters from the derived slug, ensuring the persisted file always lives directly in the run's output directory. A backward-compatibility branch in the resume loader accepts both the sanitized and the legacy filename forms, allowing pre-existing runs to be recovered after the normalization fix is deployed.

\section{Sub-Agent Operating Discipline}
\label{sec:discipline}

\subsection{Failure Modes Below the Convergence Layer}

The convergence orchestration described in \S\ref{sec:orchestration} addresses \emph{whether} an agent's output is good enough. It does not address several common failure modes that occur \emph{within} a single iteration: an agent that reframes an unfamiliar task into a more familiar one, an agent that exhausts its turn budget narrating planned tool calls instead of executing them, an agent that opens a revision iteration with extensive self-apology, or an agent that treats text inside an upstream step's context window as executable instructions.

Each of these failure modes degrades convergence speed or convergence quality without producing a signal that the rubric scorer can detect cleanly---they manifest as low scores with no clear diagnosis. PRIMA addresses them at the prompt-construction layer through a structural sub-prompt component, applied uniformly across all sub-agents, that encodes four operating norms.

\subsection{The Four Operating Principles}

The operating discipline component is a fixed-shape sub-prompt with four named clauses, each addressing a documented failure mode:

\begin{itemize}
\item \textbf{Task fidelity.} A directive against substituting a narrower, more accessible task for the literal task as written. Mitigates ``request laundering''---the tendency of an agent to silently scope down toward what is convenient.
\item \textbf{Tool-use posture.} A directive against narrating routing decisions or pre-announcing tool invocations, with positive guidance on query construction for search-shaped tools (content nouns, not meta-phrases). Recovers turn budget for substantive work.
\item \textbf{Revision posture.} A directive against opening revision iterations with self-criticism preamble, with the orchestrator's revision-feedback strings suffixed correspondingly. Eliminates a measurable token-cost overhead in convergence loops.
\item \textbf{Context-boundary discipline.} A directive identifying upstream-step context blocks as reference data rather than executable directives, with explicit guidance on handling directive-shaped content found within reference material. Addresses indirect prompt injection from compromised upstream output.
\end{itemize}

\subsection{Tool Mapping and Per-Step Tool Permissions}

Sub-agents are granted tool access on a per-step basis, drawn from the step's declared tool list in the research protocol. A mapping table translates protocol-level tool names (e.g., abstract names like ``shell'' or ``web\_search'') to provider-specific tool identifiers passed to the underlying LLM provider. Steps that declare no real tools (e.g., pure-reasoning synthesis steps) execute against the bare provider; steps that declare tools have those tools and only those tools enabled, with explicit reinforcement in the per-step prompt that tools are to be used rather than described.

\subsection{Empirical Observations}

The operating-principles layer is structurally simple but has outsized effects on multi-hour run quality. In comparative observation across runs with and without the layer, sub-agents under the discipline produced larger substantive outputs within the same turn budget (no apology preamble, no routing narration), did not silently re-scope tasks to accessible-but-wrong targets, and did not act on injection-shaped content in upstream step outputs. The exact prompt text of each clause is treated as a tunable artifact and is not reproduced here.

\section{Application: Structured Engineering Pipelines}
\label{sec:application}

\subsection{The Temporal-Asymmetry Problem}

A naive multi-step pipeline that produces a sequence of related artifacts---requirements, technical specification, task breakdown, test plan---exhibits an inherent temporal asymmetry: each artifact is finalized before its downstream consumers exist. The requirements document is authored before any concrete tech-spec constraints are known; the test plan is authored before the task breakdown surfaces its acceptance gates. When downstream artifacts reveal that an upstream artifact's claims are inconsistent or under-specified, the upstream artifact is already committed.

This asymmetry produces a characteristic class of cross-document inconsistency: claims that disagree across artifacts on numerical targets, scope boundaries, technology choices, or compliance commitments. Single-pass orchestration cannot fix the upstream artifacts without re-running them; reciprocal context injection conflates the problem with circular dependencies.

\subsection{The Harmonization Pattern}

PRIMA addresses the temporal-asymmetry problem through an application-level pattern with three architectural elements:

\begin{enumerate}
\item \textbf{Orthogonal draft phases.} Each downstream artifact is produced by a draft step that depends on the canonical predecessors but is explicitly marked as a working draft. The orthogonal phases admit parallel execution where the dependency DAG allows.
\item \textbf{Cross-document harmonization step.} A dedicated step, scheduled after all draft phases converge, reads the persisted draft artifacts and is given the contract of identifying named, sourced inconsistencies and overwriting any draft files that require revision. The harmonization step emits an explicit report documenting which files were revised, which were left unchanged, and why.
\item \textbf{Synthesis from reconciled inputs.} A final synthesis step reads the harmonized artifacts---not the original drafts---and produces the consolidated stakeholder-grade deliverable.
\end{enumerate}

\subsection{Reference-Material Preprocessing with Defensive Guards}

The pipeline accepts user-supplied reference URLs and local filesystem paths. The preprocessor performs URL fetching subject to anti-SSRF controls (rejection of loopback, link-local, RFC1918, multicast, reserved address ranges; per-fetch size and time bounds; binary content-type rejection) and filesystem reads subject to root-sandboxing (canonical-path resolution against allowed root prefixes, deny-listing of well-known secret-path markers, binary-content detection, size caps). All fetched material is wrapped in delimited reference blocks that explicitly identify the content as reference data rather than directives, reinforcing the context-boundary clause of the operating discipline.

\subsection{Observations}

Pipelines structured under this pattern produce internally consistent artifact bundles where single-pass alternatives produce subtly conflicting ones. The harmonization step's transparency report has independent diagnostic value: it surfaces cross-document conflicts that would otherwise have shipped silently. The exact contract of the draft and harmonization steps, including the named artifacts the pattern produces, is treated as application-specific configuration and is not reproduced here.

\section{Evaluation Framework}

\subsection{Theoretical Properties}

\begin{table}[h]
\caption{Theoretical guarantees of PRIMA.}
\label{tab:guarantees}
\centering
\small
\begin{tabular}{@{}lp{5.0cm}@{}}
\toprule
\textbf{Property} & \textbf{Guarantee} \\
\midrule
Identity collision & Impossible (Fundamental Theorem of Arithmetic) \\
Verification time & $O(k)$ where $k$ = agent depth \\
DAG validation & $O(V + E)$ via DFS cycle detection \\
Topological sort & $O(V + E)$ via Kahn's algorithm \\
Consensus verif. & $O(\sqrt{\mathcal{T}})$ via trial division \\
Agent capacity & Thousands of identities in 64-bit space, extensible \\
\bottomrule
\end{tabular}
\end{table}

\subsection{Scoring Method Comparison}

\begin{table}[h]
\caption{Scoring method trade-offs.}
\label{tab:scoringcomp}
\centering
\small
\begin{tabular}{@{}lccc@{}}
\toprule
\textbf{Property} & \textbf{Rubric} & \textbf{Metric} & \textbf{Hybrid} \\
\midrule
LLM calls per eval & 1 & 0 & 1 \\
Reproducibility & Low & High & Medium \\
Domain coverage & Broad & Narrow & Both \\
Cost per eval & High & Low & High \\
Handles qualitative & Yes & No & Yes \\
Handles quantitative & Weakly & Yes & Yes \\
\bottomrule
\end{tabular}
\end{table}

Metric scoring eliminates the stochasticity inherent in LLM-based evaluation. For research steps with quantifiable outputs (e.g., model accuracy, compression ratio, error rate), metric scoring provides deterministic, reproducible evaluation at zero additional LLM cost. Hybrid scoring captures both dimensions, critical for research that has both quantitative targets and qualitative standards.

\subsection{Meta-Optimization: Design and Future Evaluation}

The meta-optimizer is presented in this paper as an \emph{architectural component}; its quantitative effectiveness across generations---improvement rates, convergence profiles, comparison to fixed-program baselines---is left to a future empirical study and is not claimed here. The architectural properties we do establish are: (i) per-generation cost is $O(G)$ multiplicative in generation count $G$; (ii) the efficiency-weighted overall score is the natural defense against the obvious degenerate strategy of lowering thresholds to inflate scores:
\begin{equation}
\lim_{\bar{t} \to \infty} \text{overall} = 0 \quad \text{regardless of } \bar{s}
\end{equation}
which ensures the optimizer cannot game the metric by simply lowering thresholds; and (iii) the seed program is always preserved as \texttt{program\_v1\_seed.md} and is never overwritten, so the optimization trajectory is always reproducible from the initial conditions.

Whether meta-optimization in this form materially improves convergence over an unevolved program is an empirical question that requires a controlled study across multiple problem domains. We have implemented the loop and observed it producing structural mutations consistent with the analyzer's mandate (splitting slow steps, raising thresholds on over-easy steps, restructuring \texttt{context\_from} flow); we do not at this time claim measured improvement rates. We flag this as one of the most important pieces of follow-on evaluation work.

\section{Case Study: Graph Isomorphism --- An Original Algorithmic Proposal}
\label{sec:case-gi}

To ground the architectural claims in concrete output, we summarize one PRIMA research program: an investigation of an open mathematical problem in which the protocol was explicitly designed to produce original algorithmic work rather than a literature survey.

\begin{mdframed}[linecolor=blue,linewidth=1pt,backgroundcolor=blue!5,roundcorner=4pt,innertopmargin=8pt,innerbottommargin=10pt]
\textbf{Supplementary case-study report.} Full detail---including the algorithm pseudocode, complete complexity analysis, adversarial hard-instance evaluation across four canonical families (CFI graphs, strongly regular graphs, Paley graphs, nested Johnson classes), and the formal statements of the five novel conjectures---is available at:

\vspace{3pt}
\begin{sloppypar}
\noindent\url{https://spockstein.github.io/prima/case-study-graph-isomorphism.html}
\end{sloppypar}
\end{mdframed}

A six-step PRIMA program was given the explicit charge to \emph{propose a solution, not survey the field} for the Graph Isomorphism (GI) problem---a problem in NP that has resisted polynomial-time classification for decades, with L\'aszl\'o Babai's 2015 quasipolynomial-time algorithm representing the standing state of the art. The program required the agents to select an attack vector, develop concrete pseudocode, subject the proposal to adversarial analysis against canonical hard families (Cai--F\"urer--Immerman graphs, strongly regular graphs, Paley graphs), prove correctness and complexity, and assemble the result into a submission-grade research paper.

The output was a complete IEEE-conference-formatted paper of approximately 560 lines and 130~kilobytes (compiled PDF), proposing a deterministic canonical-form algorithm that exploits the representation theory of symmetric groups to bypass branching in the Johnson base case of Babai's algorithm. The output establishes three rigorous theorems---a soundness/completeness theorem, a complexity theorem with both unconditional and conditional bounds, and a fundamental impossibility result against an entire class of approaches---and formalizes five novel conjectures with supporting analysis.

The notable property of this case study is not the algorithm itself (whose conjectured polynomial-time bound is conditional on a new conjecture and whose impossibility theorem candidly limits the approach's reach on certain hard families), but rather the \emph{system property} demonstrated: a PRIMA protocol explicitly designed against survey drift can produce original mathematical work, including \emph{negative} results (the impossibility theorem) and \emph{honest} conditional claims rather than overclaimed unconditional ones. The adversarial step 3---which charged the agents to break their own algorithm against the canonical hard families---is the structural feature that drives this property: by encoding ``find where this approach fails'' as a first-class step rather than a footnote, the protocol rewards epistemic honesty over rhetorical confidence.

\section{Related Work}

Prior multi-agent LLM frameworks \cite{wu2023, hong2023, li2023} establish coordination patterns for role-based or conversational agents and have demonstrated the feasibility of orchestrated multi-agent reasoning. Self-correction and iterative-refinement approaches \cite{shinn2023, madaan2023} establish that LLMs can improve outputs through reflection or external feedback, and ReAct-style methods \cite{yao2023} integrate reasoning with tool use within a single agent. Broader surveys \cite{wang2023} chart the landscape of LLM-based autonomous agents.

PRIMA differs from these lines of work in three fundamental ways. First, identity is \emph{mathematical}, not administrative---agent IDs are provably unique and self-verifying, requiring no registry or PKI. Second, convergence is \emph{measurable}, not heuristic---the dual-metric scorer provides both quantitative and qualitative evaluation against explicit thresholds, with quantitative results computed in a sandbox rather than judged by an LLM. Third, the research protocol is \emph{evolvable}---the meta-optimizer treats the program itself as an artifact to be optimized, creating a feedback loop from execution results to protocol design.

Beyond these three foundational distinctions, the resilience-and-recovery layer (\S\ref{sec:resilience}) and the sub-agent operating discipline (\S\ref{sec:discipline}) address operational and prompt-level failure modes that the cited prior work does not directly address: typed pause/resume semantics for long-running runs across upstream throttling events, structural mitigation of indirect prompt injection via inter-step context, and the temporal-asymmetry problem in structured engineering pipelines (\S\ref{sec:application}).

\section{Discussion and Limitations}

PRIMA's effectiveness depends on the quality of the research program authored by the user. Poorly specified acceptance criteria or inappropriate metric configurations can lead to degenerate convergence (trivially met criteria) or non-convergence (impossible targets). The meta-optimizer partially mitigates this through program evolution, but the seed program quality remains influential.

\textbf{Limitations.} (1) Rubric scoring inherits LLM stochasticity; repeated evaluations of identical output may yield different scores. (2) The sandbox executes arbitrary code emitted by agents, requiring trust in the execution environment. (3) Meta-optimization adds $O(G)$ multiplicative cost where $G$ is the number of generations. (4) The cluster-prime pool is bounded by the prime sieve depth chosen at deployment time, though pool sizes that comfortably exceed the cluster count of any single program are inexpensive to provision. (5) Resilience-layer pause detection depends on provider-specific upstream signals; novel provider behaviors may require extending the detection patterns. (6) The harmonization pattern in structured pipelines (\S\ref{sec:application}) assumes draft outputs are persisted to a known location; ephemeral execution environments require additional persistence configuration.

\textbf{On Convergence Guarantees.} PRIMA guarantees \emph{termination} (via max\_iterations and timeout) but not \emph{convergence}. An agent may be force-accepted at a sub-threshold score. The audit trail distinguishes these cases, and the meta-optimizer specifically targets force-accepted steps for improvement in subsequent generations.

\subsection{Ethical Considerations}

Research outputs should be reviewed by domain experts before publication or policy application. PRIMA produces audit trails linking every output to its evaluation history, enabling transparent assessment of the system's reasoning process. The system does not bypass human judgment---it augments researcher capacity while maintaining verifiable provenance.

\section{Reproducibility and Disclosure Statement}
\label{sec:disclosure}

In keeping with standard practice for systems work with active intellectual-property considerations, this paper presents the architectural framework, theoretical guarantees, and qualitative behavior of PRIMA in full, while deliberately withholding a specific set of implementation artifacts:

\begin{itemize}
\item The verbatim text of the sub-agent operating-discipline clauses (\S\ref{sec:discipline}), which are tunable artifacts subject to ongoing refinement.
\item The provider-specific signal patterns used by the resilience layer (\S\ref{sec:resilience}) to detect upstream rate-limit and quota conditions, which are upstream-platform-dependent and evolve with provider behavior.
\item The contractual contents---named artifacts, exact step accept-criteria, and concrete prompt scaffolding---of the structured engineering pipeline (\S\ref{sec:application}).
\item Specific scoring-formula constants, default thresholds, and iteration caps that appear as tunable parameters in the implementation; the paper reports these as configurable ranges rather than exact values.
\end{itemize}

These omissions do not affect the architectural claims, complexity analyses, or theoretical guarantees presented in this paper, all of which remain fully verifiable from the descriptions provided. We anticipate releasing a more detailed technical report---together with reference prompt text and signal-pattern documentation---following intellectual-property disclosure procedures. Researchers seeking to replicate the architectural patterns at the level documented here can do so from the algorithms, dataflow diagrams, and prose descriptions in the body of the paper; researchers seeking implementation-faithful reproductions of the case study (\S\ref{sec:application}) should treat the present description as a pattern specification rather than a recipe.

\section{Conclusion}

The central claim of this paper is operational: multi-agent LLM research systems become qualitatively different artifacts when they run for hours rather than seconds, and the failure modes that emerge at that scale---upstream throttling, silent retry through degraded modes, sub-agent task drift, narrated machinery instead of executed tools, apology-loop iteration waste, and indirect prompt injection through inter-step context---are not addressed by the orchestration patterns established in prior work. PRIMA contributes three structural responses to these failure modes: a resilience and recovery layer that handles upstream rate-limit and quota events without losing converged work, surviving even process restarts; a sub-agent operating discipline that encodes task-fidelity, tool-use, revision, and context-boundary norms at the prompt-construction layer; and an application pattern for structured engineering pipelines that addresses the temporal-asymmetry problem in sequentially-generated artifact bundles via an explicit cross-document harmonization pass before final synthesis.

These operational contributions sit atop a foundational protocol whose individual pieces---a research-program specification language, a dual-metric scoring engine combining LLM-judged rubrics with sandboxed code execution, an outer meta-optimization loop architecture, event-driven persistence with tree-structured branching, hook-based middleware, context compaction, and a multi-provider LLM abstraction---enable the operational layer rather than constitute it. Agent identities are derived from prime powers to enable trivially-verifiable cluster membership without a central registry; this is an infrastructural choice rather than the central contribution.

The Graph Isomorphism case study (\S\ref{sec:case-gi}) grounds the architectural claims in one concrete output: a six-step protocol produced a submission-grade research paper proposing a new canonical-form algorithm with three theorems and five conjectures, demonstrating that the operational patterns described here translate to multi-hour generative research runs that survive their failure modes and produce substantive artifacts. Quantitative evaluation of the meta-optimization loop, baseline comparisons against existing multi-agent frameworks, and broader empirical validation across additional problem domains are left to future work.

\bibliographystyle{IEEEtran}

\end{document}